\documentclass[12pt]{amsart}
\usepackage[left=15mm, right=15mm, top=10mm, bottom=15mm]{geometry}

\usepackage[english]{babel}
\usepackage[T1]{fontenc}
\usepackage[utf8]{inputenc}

\theoremstyle{plain}
    \newtheorem{theorem}{Theorem}[section]
    \newtheorem{lemma}[theorem]{Lemma}

\theoremstyle{definition}

\usepackage[
    draft = false,
    unicode = true,
    colorlinks = true,
    allcolors = blue,
    hyperfootnotes = true,
    citecolor = red
]{hyperref}
\usepackage{amsfonts,amsmath,amssymb,amscd,amsthm,url}
\usepackage[inline]{enumitem}
\usepackage{graphicx,epsf,afterpage, wrapfig}
\usepackage{soul}
\usepackage{multicol}
\usepackage{array}
\usepackage{braket}
\usepackage{epigraph}
\usepackage{rotating, floatflt}
\usepackage{xstring} % Пакет для надежной работы со строками

\usepackage[
backend=biber,
style=numeric-comp,
sorting=none,
giveninits=true
]{biblatex}
\usepackage{csquotes}

\usepackage{xcolor}

\usepackage{ulem}

\usepackage{tikz}
\usetikzlibrary{positioning, calc, intersections, through, arrows, matrix, chains, math}
\usetikzlibrary{arrows.meta}
\usetikzlibrary{shadows}
\usetikzlibrary{graphs}
\usetikzlibrary{graphs.standard}
\usetikzlibrary{patterns}
\usetikzlibrary{decorations.pathreplacing,calligraphy,backgrounds}
\DeclareFieldFormat{usera}{\href{https://arxiv.org/abs/#1}{\texttt{arXiv:#1}}}

\renewbibmacro*{finentry}{\printfield{usera}\newunit\finentry}

\renewbibmacro{in:}{%
  \iffieldundef{usera}
    {\printtext{\bibstring{in}\intitlepunct}}
    {}%
}

\newcommand\norm[1]{\ensuremath{\left\lVert#1\right\rVert}}
\newcommand\abs[1]{\ensuremath{\left\lvert#1\right\rvert}}

\newcommand{\Hcal}{\mathcal{H}}

\newcommand{\R}{\ensuremath{\mathbb{R}}}
\newcommand{\Z}{\ensuremath{\mathbb{Z}}}

\renewcommand{\geq}{\geqslant}

\renewcommand{\leq}{\leqslant}

\newcounter{mcnt}

\newcounter{wordcnt}

\begin{document}

\title{Kolmogorov--Arnold against bounded translations}

\author{Sviatoslav V. Dzhenzher}

\begin{abstract}
    Historically originating from Hilbert's 13th problem, the Kolmogorov--Arnold representation theorem (KART) has recently experienced a major revitalisation through its applications to neural networks, specifically Kolmogorov–Arnold Networks (KANs).
    While the exact representation is well established, its stability under continuous adversarial perturbations of the hidden layer remains a critical open question.
    In this paper, we investigate the robustness of KART against bounded adversarial translations. We provide an explicit, self-contained, and constructive proof of an approximate representation using fixed, piecewise linear ``inner'' functions.
    Crucially, our construction employs a single ``outer'' function that remains invariant for all summands and is independent of the specific adversarial translation, provided its maximum bound is known a priori.
\end{abstract}

\thanks{\hspace{-4mm}
S.\,V. Dzhenzher: sdjenjer@yandex.ru. orcid: 0009-0008-3513-4312
\\
% V.\,Zh. Sakbaev: fumi2003@mail.ru. orcid: 0000-0001-8349-1738
% \\
% V.\,Zh. Sakbaev: Keldysh Institute of Applied Mathematics of Russian Academy of Sciences 125047, Miusskaya pl. 4, Moscow, Russia
% \\
% All authors:
Moscow Institute of Physics and Technology 141701, Institutskii lane, 9, Dolgoprudny, Russia}

\maketitle
\thispagestyle{empty}

\noindent \emph{Keywords:} Kolmogorov--Arnold representation theorem, superposition of functions, Hilbert's 13th problem, adversarial robustness, bounded translations, modulus of continuity, Kolmogorov--Arnold Networks (KANs).

\vspace{3mm}
\noindent \emph{MSC 2020:}
Primary:
26B40, % — Representation and superposition of functions (Functions of several variables)
41A63; % — Multidimensional problems (Approximation theory)
Secondary:
26B35, % — Special properties of functions of several variables (continuity, differentiability, etc.) 
68T07. % — Core models and architectures (Artificial neural networks and deep learning)

\section{History and introduction}

Historically, this field of research originates from Hilbert's 13th problem \cite{wiki-Hilb-13-problem, Morris-2021-hilbert13}.
In 1836, Hamilton showed \cite{Hamilton1836} that any seventh-degree equation can be reduced to an equation of the form
\[
    x^7 + ax^3 + bx^2 + cx + 1 = 0.
\]
For the quadratic equation \(ax^2+bx+c=0\), we all know the school formula that the solution to the equation \(x = x(a,b,c) = \frac{-b\pm \sqrt{b^2-4ac}}{2a}\), considered a continuous function of three variables, can be expressed as the superposition of arithmetic operations.
Analogously, Hilbert asked whether the solution \(x=x(a,b,c)\) of the seventh-degree equation can be expressed as the superposition of functions of two variables.
For continuous functions, the problem was resolved affirmatively in 1956--57 by Kolmogorov and Arnold in their companion papers \cite{Kolmogorov56, Arnold57, Kolmogorov57}.
Nowadays, their result is known as the Kolmogorov--Arnold representation theorem (KART; see Theorem~\ref{t:kar}; in computer science, it is sometimes referred to as the Kolmogorov Superposition Theorem).

Initially, KART faced long-standing criticism for its non-constructive nature and the low smoothness of the resulting inner and outer functions.
Specifically, Vitushkin \cite{Vitushkin-1954, Vitushkin-overview} and Fridman \cite{Fridman} demonstrated strict limitations regarding the differentiability of these representations.
It might seem amusing that after half a century, even Ismailov's version \cite{Ismailov-2026} for discontinuous functions has found practical applications.
Another line of research was strictly topological, where Ostrand \cite{Ostrand65} generalised KART to functions on finite-dimensional compact spaces, and Sternfeld \cite{Sternfeld85, Sternfeld89} proved the lower bound on the dimension of these compact sets (see also \cite{Dzhenzher24} for an overview).
Nearly three decades passed before it was recognised \cite{HechtNielsen87, Kurkova91, Maiorov-Pinkus} that KART also provides a significant contribution to the topic of neural networks (NNs).
The approach was completely revitalised in 2024 by Kolmogorov--Arnold Networks (KANs) \cite{KAN, KAN2, CKAN}.

In light of the many practical implications of KART, the question of the stability of the Kolmogorov--Arnold representation was posed in \cite{DzhenzherFreedman-25}.
More specifically, the sensitivity of KART under continuous reparameterisations of the hidden layer was investigated.
Unlike coding theory, where the action of the noise is unknown, it was assumed that the action of the reparameterisation is known and that the output layer can be adjusted.
A useful analogy is that of a known adversary attempting to perturb the hidden layer, and we try to withstand this attack by varying the output layer.

In \cite{DzhenzherFreedman-25}, the stability of KART under countable families of adversarial homeomorphisms was established, and several natural questions were posed.
In this paper, we address some of them.

The paper is organised as follows.
Section~\ref{s:defs} presents the necessary background and the main result (Theorem~\ref{t:kart-appx}).
Section~\ref{s:proof} provides the proof of the main result.
Section~\ref{s:discuss} is devoted to the discussion of results and their place among other results on the topic.
Section~\ref{s:concl} concludes the paper and outlines further research.

\section{Background and main results}\label{s:defs}

We say that a continuous tuple \(\psi\colon[\,0,1\,]\to[\,0,1\,]^k\) is a tuple consisting of continuous functions \(\psi_1,\ldots,\psi_k\colon[\,0,1\,]\to[\,0,1\,]\).
Analogously, we say that a continuous matrix \(\phi\colon[\,0,1\,]\to[\,0,1\,]^{m\times k}\) is a tuple consisting of continuous tuples \(\phi_1,\ldots,\phi_m\colon[\,0,1\,]\to[\,0,1\,]^k\); in other words, this is a matrix consisting of continuous functions \(\phi_{i,j}\colon[\,0,1\,]\to[\,0,1\,]\).
We will use the analogous terminology for tuples and matrices with other continuity properties and ranges.

\begin{theorem}[Kolmogorov, Arnold; 1956--57]\label{t:kar}
    Let $n>1$ be an integer. \\
    There exists a fixed ``inner'' continuous matrix \(\phi\colon [\,0,1\,]\to[\,0,1\,]^{n\times2n+1}\) such that \\
    for any ``target'' continuous function \(f\colon [\,0,1\,]^n\to\R\) \\
    there exist ``outer'' uniformly continuous functions \(g_1,\ldots,g_{2n+1}\colon\R\to\R\) such that
    \[
        f(x) = \sum_{i=1}^{2n+1} g_i\left(\sum_{j=1}^n\phi_{j,i}(x_j)\right)
        \quad\text{for any}\quad x=(x_1,\ldots,x_n)\in [\,0,1\,]^n.
    \]
\end{theorem}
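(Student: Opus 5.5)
We will follow the classical Kolmogorov--Arnold--Lorentz--Sprecher scheme, which splits into an approximation lemma and an iteration. The first step constructs the inner matrix $\phi$. We take a sequence of scales $\delta_k\to 0$. For each $k$ and each shift $i\in\{1,\ldots,2n+1\}$ we consider a family of disjoint closed cubes of side about $\delta_k$. This family is obtained from a product grid of closed intervals in $[\,0,1\,]$ separated by small gaps, with the grid shifted by $i\delta_k/(2n+1)$. The resulting combinatorial fact is that every $x\in[\,0,1\,]^n$ lies in cubes of at least $n+1$ of the $2n+1$ shifted families, because each coordinate $x_j$ can fall into a gap for at most one shift. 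We then choose continuous increasing $\phi_{j,i}$ inductively in $k$, refining at each level, such that two conditions hold. First, $\phi_{j,i}$ is nearly constant on every grid interval of level $k$. Second, the sums $\sum_j\phi_{j,i}(x_j)$ take pairwise disjoint value-intervals on different cubes of the $i$-th family at level $k$. The usual device is to make $\phi_{j,i}$ rationally independent in an appropriate sense, e.g.\ replacing them by $\lambda_j\phi_i$ with rationally independent $\lambda_j$ and perturbing at each level. Since the later levels only perturb by amounts much smaller than the gaps already created, the separation persists, and in the limit we get one continuous $\phi$ that serves all levels simultaneously. We also rescale so that the values lie in $[\,0,1\,]$.

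The second step is the key approximation lemma. For any continuous $h$ on $[\,0,1\,]^n$ with $\|h\|=M$, we pick a level $k$ at which $h$ oscillates by less than $\varepsilon M$ on every cube of side $\delta_k$. On the image interval of cube $C$ of family $i$, we define $\chi_i$ to be the constant $\frac{1}{n+1}h(\text{center of }C)$. Between these intervals we extend $\chi_i$ linearly; this is possible because the images are disjoint. Then $\|\chi_i\|\le M/(n+1)$. At each $x$, at least $n+1$ summands contribute approximately $h(x)/(n+1)$. At most $n$ summands are \lq\lq wrong\rq\rq, but each of these is bounded by $M/(n+1)$. This gives
\[
\Bigl\|h-\sum_{i=1}^{2n+1}\chi_i\Bigl(\sum_j\phi_{j,i}(x_j)\Bigr)\Bigr\|\le \theta M,\qquad \theta=\frac{n}{n+1}+\varepsilon<1 .
\]

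The third step iterates the lemma: we set $h_0=f$ and $h_{r+1}=h_r-\sum_i\chi^{(r)}_i(\cdots)$. The series $g_i=\sum_r\chi^{(r)}_i$ converges uniformly, since $\|\chi^{(r)}_i\|\le\theta^r\|f\|/(n+1)$, and the remainder tends to zero. Each $\chi_i^{(r)}$ is continuous and can be taken constant outside $[\,0,n\,]$, so the $g_i$ are uniformly continuous on $\R$, which gives the exact representation.

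The main obstacle is the first step. There we must construct a single $\phi$, independent of $f$, that separates the cube images at every level at once. This means carrying out the inductive perturbation carefully, so that new separations at level $k+1$ do not destroy those at level $k$ and the limit stays continuous. We would first try the Lorentz/Sprecher approach: fix one continuous increasing function, take rationally independent multipliers $\lambda_j$, and use genericity (a Baire-category argument in the space of increasing functions) to guarantee separation at each level. Whichever argument is used, the step that must be checked is that at every level the cube-image intervals remain pairwise disjoint in the limit.
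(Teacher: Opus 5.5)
The paper gives no proof of Theorem~\ref{t:kar}; it quotes it from Kolmogorov and Arnold, mentioning the Lorentz--Golitschek--Makovoz exposition for the $\lambda_j\phi_i$ form. So there is nothing to match line by line. Your outline is the standard classical argument, and your Steps 2 and 3 are correct as written (the clean count is to select exactly $n+1$ good indices and treat the remaining $n$ summands as errors).

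Your Step 2 is exactly the mechanism of the paper's proof of Theorem~\ref{t:kart-appx}, specialised to $m=2n+1$ and $t=0$:
red segments of length $\gamma(m-1)$ separated by gaps of length $\gamma$;
every point lying in red solids of at least $m-n=n+1$ ranks;
the outer function equal to $f(p(S))/(m-n)$ on separated windows;
and the error $\omega_f(\gamma(m-1))+\frac{n}{m-n}\norm{f}$, which is your $\theta M$.

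The real difference is Step 1. The paper makes $\phi_{j,i}$ exactly constant on red segments with values $\delta(iK^n+kK^{j-1})$. This base-$K$ encoding separates solid images by an explicit margin $\delta=2C+\delta_0$, which is what absorbs the translations. But with a fixed margin between about $K^n$ values, the range grows like $\Delta=\delta K^n$, so the encoding is tied to a single scale $\gamma$. That is why the paper gets only an approximation and explains in Section~\ref{s:discuss} that it cannot iterate. Your $\phi$ must instead separate at all scales at once, with margins tending to zero, and that is what makes your Step 3 and the exact representation possible.

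Since Step 1 carries all the content, one condition your sketch leaves implicit must be added, or ``later levels only perturb slightly'' cannot be achieved. The scales must be compatible: for instance, $\delta_{k+1}$ should be small enough relative to the level-$k$ gap length that every level-$k$ gap of family $i$ contains a whole level-$(k+1)$ interval of family $i$.

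Here is why it is needed. Suppose instead that some interval $J$ of a finer level $k'$ meets two adjacent level-$k$ intervals $A<B$. Write $\psi_l=\phi_{l,i}$. Take a level-$k$ interval $I$ containing two level-$k'$ intervals $J_1<J_2$, and an index $l\neq j$ (possible since $n>1$). Use level-$k$ separation of the two cubes differing only in the $j$-th factor ($A$ versus $B$, with $I$ as $l$-th factor). Use level-$k'$ separation of the two cubes differing only in the $l$-th factor ($J_1$ versus $J_2$, with $J$ as $j$-th factor). Together they give
\[
\mathrm{osc}_I\psi_l\geq\psi_l(\min J_2)-\psi_l(\max J_1)>\mathrm{osc}_J\psi_j\geq\psi_j(\min B)-\psi_j(\max A)>\mathrm{osc}_I\psi_l ,
\]
which is a contradiction.

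The same condition lets you split each level-$k$ jump among the finer intervals lying inside the gap. This is what keeps the limit continuous; otherwise a jump can survive along a nested sequence of gaps.

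If you take the Baire route instead, note that separation at a prescribed level is open but not dense. A $\phi$ that rises steeply inside one level-$k$ interval cannot be repaired by a small perturbation. You must use open dense sets such as ``separation at some level $\geq r$'', or the approximation property for each $f$ in a countable dense set. These give separation at infinitely many levels, which is all your Step 2 needs.
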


In \cite{Lorentz-Golitschek-Makovoz}, the exposition is presented in which the inner functions are chosen of the form \(\phi_{j,i}(x_j) = \lambda_j\phi_i(x_j)\) for \(\lambda_1,\ldots,\lambda_n\) being any rationally independent numbers; moreover, the outer functions \(g_i\) can be chosen equal.
In \cite{Ismailov-2026}, it is shown that the theorem holds for discontinuous functions $f$ if we allow the ``outer'' functions $g_i$ to be discontinuous.
See other versions of KART with different ``inner'' and ``outer'' functions in \cite{Hedberg-appx, Brattka2007, Sprecher1965}.

The following is the version from \cite{DzhenzherFreedman-25} on the stability of KART under ``adversarial'' attacks on the ``inner'' layer.

\begin{theorem}[Dzhenzher, Freedman; 2025]\label{t:kart-adv}
    Let \(n>1\) be an integer.
    Let $\Hcal$ be a countable set of homeomorphisms \(\R^{2n+1}\to\R^{2n+1}\). \\
    There exists a fixed ``inner'' continuous matrix \(\phi\colon [\,0,1\,]\to[\,0,1\,]^{n\times 2n+1}\) such that \\
    for any ``target'' continuous function \(f\colon[\,0,1\,]^n\to\R\) and for any ``adversarial'' \(h=(h_1,\ldots,h_n)\in\Hcal^n\), \\
    there exists a uniformly continuous function \(g\colon\R\to\R\) such that
    \[
        f(x) = \sum_{i=1}^{2n+1} g\left(\sum_{j=1}^n h_{j,i}\bigl(\phi_j(x_j)\bigr)\right)
        \quad\text{for}\quad x=(x_1,\ldots,x_n)\in[\,0,1\,]^n.
    \]
\end{theorem}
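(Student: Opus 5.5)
The plan is a Baire category argument in the style of Kahane's proof of Theorem~\ref{t:kar}. Let \(\Phi\) be the complete metric space of continuous matrices \(\phi\colon[\,0,1\,]\to[\,0,1\,]^{n\times 2n+1}\) with the sup metric.

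\textbf{Step 1: reduction to residual sets.} For a fixed adversary \(h\in\Hcal^n\) and a fixed \(\phi\), set
\[
    \psi^{h,\phi}_i(x)=\sum_{j=1}^n h_{j,i}\bigl(\phi_j(x_j)\bigr), \qquad i=1,\ldots,2n+1 .
\]
Fix a constant \(\theta<1\), for instance \(\theta=\frac{2n+1}{2n+2}\). Let \(\{f_k\}\) be a countable dense subset of the unit sphere of \(C([\,0,1\,]^n)\), and fix a tolerance \(\varepsilon_m\to0\). Let \(U_{h,k,m}\subset\Phi\) be the set of \(\phi\) for which there is a continuous \(g\) with \(\norm{g}\leq1\) and
\[
    \norm{f_k-\sum_{i=1}^{2n+1} g\circ\psi^{h,\phi}_i}<\theta+\varepsilon_m .
\]
Here only the values of \(g\) on the compact set \(\bigcup_i\psi^{h,\phi}_i([\,0,1\,]^n)\) matter, and \(g\) can be extended with the same norm.

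\textbf{Step 2: openness.} \(U_{h,k,m}\) is open. The map \(\phi\mapsto\psi^{h,\phi}\) is continuous because each \(h_j\) is uniformly continuous on the compact cube \([\,0,1\,]^{2n+1}\), and the strict inequality survives a small change of \(\psi\) for a fixed uniformly continuous \(g\).

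\textbf{Step 3: density (the main obstacle).} Given \(\phi\) and \(\delta>0\), I first replace \(\phi\) by a \(\delta\)-close matrix that is piecewise constant on Kolmogorov's staggered interval families. For each \(i\), the \(i\)-th family consists of short intervals whose shifts are arranged so that every point of \([\,0,1\,]\) lies in at least \(2n\) of the \(2n+1\) families. I then join these constant pieces by linear pieces in the small gaps.

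The constant pieces are preserved by \(h\): if \(\phi_j\) equals a constant \(c\) on an interval, then \(h_j\circ\phi_j\) equals the constant \(h_j(c)\) there. Consequently, on each product of chosen intervals (a ``cube'' of rank \(i\)), the function \(\psi^{h,\phi}_i\) equals \(\sum_j h_{j,i}(c_j)\).

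Next I must choose the constants \(c_j\) so that all these finitely many values are pairwise distinct, over all cubes and over all \(i\) simultaneously; this is what allows a single \(g\). The key is that each \(h_j\) is a homeomorphism, hence an open map. I move every \(c_j\) into the interior of \([\,0,1\,]^{2n+1}\), slightly away from the boundary. Then the values \(h_j(c_j)\) range over open sets as the \(c_j\) vary in small balls. The finitely many coincidence conditions \(\sum_j h_{j,i}(c_j)=\sum_j h_{j,i'}(c'_j)\) are then closed nowhere dense conditions, so a generic small choice avoids all of them.

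The images of distinct cubes are then distinct points. By uniform continuity of \(\psi\), the images of slightly enlarged cubes are disjoint small intervals. On the interval belonging to a cube \(Q\) of rank \(i\), I set \(g\) equal to \(\frac{1}{2n+1}f_k(\text{center of }Q)\), and I interpolate \(g\) linearly elsewhere with \(\norm{g}\leq\frac1{2n+1}\).

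For each \(x\), at least \(2n\) of the indices \(i\) contribute almost exactly \(\frac{1}{2n+1}f_k(x)\), since the cubes are small and \(f_k\) is uniformly continuous. The remaining index contributes at most \(\frac{1}{2n+1}\) in absolute value. Hence the error is at most about \(\frac{2}{2n+1}<\theta\), so the perturbed matrix lies in \(U_{h,k,m}\) and \(U_{h,k,m}\) is dense.

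\textbf{Step 4: conclusion.} \(\Hcal^n\) is countable, so the intersection of all \(U_{h,k,m}\) is residual, and in particular nonempty. I fix \(\phi\) in it. For arbitrary \(f\) and \(h\), I run the standard iteration: approximate \(f\) by some \(f_k\) (rescaled), obtain \(g_1\) with \(\norm{f-\sum_i g_1\circ\psi_i}\leq\theta'\norm f\) for some \(\theta'<1\), and repeat on the remainder. The resulting series \(g=\sum_r g_r\) converges uniformly because \(\norm{g_r}\leq\theta'^{\,r-1}\norm f\), and it gives the exact representation with a single uniformly continuous \(g\).

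I expect the delicate point to be Step 3: controlling the behaviour near the boundary of \([\,0,1\,]^{2n+1}\), and keeping the linear connecting pieces from destroying the disjointness needed for a single \(g\). The first remedy I would try is to make the gaps tiny and to count the points in them as falling in the at most one ``bad'' family.
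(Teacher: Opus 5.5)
\paragraph{Verdict.} The paper only quotes this theorem from \cite{DzhenzherFreedman-25}. Its discussion indicates that the original proof is of your type: Kahane's Baire category argument plus the geometric iteration. Your Steps 1, 2 and 4 are sound. The gap is in Step 3, at ``the constant pieces are preserved by \(h\)''.

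\paragraph{Where Step 3 fails.} Because \(h_j\colon\R^{2n+1}\to\R^{2n+1}\) mixes coordinates, \(h_{j,i}(\phi_j(x_j))\) depends on the whole row \(\phi_j=(\phi_{j,1},\ldots,\phi_{j,2n+1})\). To make \(\psi^{h,\phi}_i\) constant on rank-\(i\) cubes, you would need the entire row \(\phi_j\) to be constant on every rank-\(i\) interval, and this for every \(i\). But the \(2n+1\) staggered families overlap (every point lies in at least \(2n\) of them) and together cover the connected set \([\,0,1\,]\). So a function constant on all of these intervals is globally constant. Then all cubes of one rank have the same image, which contradicts the distinctness you need for a single \(g\). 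If instead only the entry \(\phi_{j,i}\) is constant on rank-\(i\) intervals (the classical construction), the other entries \(\phi_{j,i'}\) pass through their own gaps inside each rank-\(i\) interval. Then \(h_{j,i}\circ\phi_j\) is not constant there. Either way the density step fails. Your use of the openness of \(h_j\) only makes the constants generic and does not repair this.

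\paragraph{The missing idea.} Build the staircase after applying \(h_j\), then pull it back.
\begin{enumerate}
\item Move \(\phi\) slightly into \([\,\eta,1-\eta\,]^{n\times(2n+1)}\) and set \(\tilde\phi_j:=h_j\circ\phi_j\). Its values lie in the compact set \(h_j([\,\eta,1-\eta\,]^{2n+1})\), which sits inside the open set \(h_j((0,1)^{2n+1})\).
\item Perturb \(\tilde\phi_j\) uniformly and slightly to \(\tilde\phi'_j\) whose \(i\)-th coordinate is constant on rank-\(i\) intervals and linear on rank-\(i\) gaps. Choose the constants so that the finitely many cube sums are pairwise distinct over all cubes and all ranks; this is now an elementary choice of reals.
\item For a small enough perturbation, \(\tilde\phi'_j\) still takes values in \(h_j((0,1)^{2n+1})\). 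So \(\phi'_j:=h_j^{-1}\circ\tilde\phi'_j\) takes values in \([\,0,1\,]^{2n+1}\). It is close to \(\phi_j\) by uniform continuity of \(h_j^{-1}\) on a compact neighbourhood, and it satisfies \(h_{j,i}\circ\phi'_j=\tilde\phi'_{j,i}\).
\end{enumerate}
With this, the rest of your Step 3 goes through.

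\paragraph{A smaller slip.} A point \(x\in[\,0,1\,]^n\) lies in cubes of at least \(n+1\) ranks, not \(2n\), since each coordinate may miss one rank. With your normalisation \(g=\frac{1}{2n+1}f_k\) on cubes, the error is at most \(\frac{2n}{2n+1}\) plus a small term, not \(\frac{2}{2n+1}\). This is still strictly below \(\theta=\frac{2n+1}{2n+2}\), so this slip does no harm.
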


For example, if we take all \(h_j\) to be identities, then this result is a reformulation of Theorem~\ref{t:kar}.
An analogous result was proved for the ``inner'' functions of the form \(\phi_{j,i}(x_j) = \lambda_j\phi_i(x_j)\), with the ``adversarial'' homeomorphism acting on \(\phi\) but not on \(\lambda\).

In \cite{DzhenzherFreedman-25}, difficulties regarding equicontinuity arose, preventing similar results from being established for continuous groups of homeomorphisms.
It was mentioned that the approximation trivially works for adversaries from the group of translations \(\R^{2n+1}\to\R^{2n+1}\) if we are allowed to use different ``outer'' functions~$g_i$.
The following theorem partially solves the case for the unique ``outer'' function~$g$.

Hereafter, \(\abs{\cdot}_\infty\) is the max-metric in $\R^m$, and \(\norm{\cdot}\) is the sup-norm for continuous functions.
Recall that the modulus of continuity \(\omega_f\colon[\,0,+\infty)\to[\,0,+\infty)\) of the continuous function \(f\colon[\,0,1\,]^n\to\R\) is
\[
    \omega_f(d) := \sup\bigl\{\abs{f(x)-f(y)} \;:\; \abs{x-y}_\infty \leq d\bigr\}.
\]

\begin{theorem}\label{t:kart-appx}
    Let \(n>1\) and \(m\geq n+1\) be integers.
    Let $C > 0$, $\delta_0>0$ and $0<\gamma<\frac{1}{m}$ be real numbers.
    Then \\ there exists an ``inner'' piecewise linear matrix \(\phi\colon[\,0,1\,]\to\R^{n\times m}\) such that \\
    for any ``target'' continuous function \(f\colon[\,0,1\,]^n\to\R\) \\ there exists an ``outer'' piecewise linear \(\frac{2\norm{f}}{\delta_0(m-n)}\)-Lipschitz function $g\colon\R\to\R$ such that \(\norm{g} \leq \frac{1}{m-n}\norm{f}\) and \\
    for any ``adversarial'' \(t\in[\,-C,C\,]^m\),
    \[
        \abs{f(x) - \sum_{i=1}^mg\left(\sum_{j=1}^n\phi_{j,i}(x_j)+t_i\right)} \leq
        \omega_f(\gamma(m-1)) + \frac{n}{m-n}\norm{f}
        \quad\text{for}\quad x=(x_1,\ldots,x_n)\in[\,0,1\,]^n.
    \]
\end{theorem}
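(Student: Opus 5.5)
The plan is a Lorentz-type ``staggered grid'' construction. The key feature is that the error budget \(\frac{n}{m-n}\norm{f}\) is paid by exactly \(n\) summands, each of size at most \(\norm{g}\leq\frac{1}{m-n}\norm{f}\). The robustness to translations comes from placing the codes of distinct cubes far apart, at distance larger than \(2C+\delta_0\), so that a translation \(t_i\in[-C,C]\) never carries the code of one cube into the plateau of another.

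\emph{Step 1 (staggered grids).} For each \(i=1,\ldots,m\), I partition \(\R\) into ``good'' intervals \(I^i_k=[\,km\gamma+i\gamma,\;km\gamma+i\gamma+(m-1)\gamma\,]\), \(k\in\Z\), separated by ``gaps'' of length \(\gamma\). Since the \(m\) shifts \(i\gamma\) tile a period \(m\gamma\), each point of \([0,1]\) lies in a gap for at most one index \(i\). Call \(i\) \emph{good} for \(x\in[0,1]^n\) if every coordinate \(x_j\) lies in a good interval of grid \(i\). Each coordinate spoils at most one index, so at least \(m-n\) indices are good for every \(x\). For good \(i\), the point \(x\) lies in a cube \(Q=\prod_j I^i_{k_j}\) of side \((m-1)\gamma\). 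Hence \(\abs{f(x)-f(c_Q)}\leq\omega_f(\gamma(m-1))\), where \(c_Q\) is any chosen point of \(Q\cap[0,1]^n\).

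\emph{Step 2 (inner codes).} Only finitely many intervals meet \([0,1]\); say their number is at most \(N\). Fix a spacing \(L>2C+\delta_0\). On \(I^i_k\) I set \(\phi_{j,i}\equiv L\bigl(N^{j-1}k+N^n i\bigr)\) (after reindexing \(k\geq0\)), and I interpolate linearly across the gaps. This makes \(\phi\) piecewise linear. The code \(\sum_j\phi_{j,i}(x_j)=L(\sum_j N^{j-1}k_j+N^n i)\) is then an injective function of the pair \((i,Q)\), by uniqueness of base-\(N\) digits. Distinct codes are therefore at distance at least \(L\).

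\emph{Step 3 (outer function).} I define \(g\equiv f(c_Q)/(m-n)\) on each plateau \([\,\mathrm{code}(i,Q)-C,\;\mathrm{code}(i,Q)+C\,]\). Adjacent plateaus are at distance at least \(L-2C>\delta_0\). Between them I interpolate linearly over a segment of length \(\delta_0\) and stay constant elsewhere; beyond the extreme codes \(g\) is constant. Then \(g\) is piecewise linear with \(\norm{g}\leq\norm{f}/(m-n)\). Each linear piece changes value by at most \(2\norm{f}/(m-n)\) over length \(\delta_0\), which gives the Lipschitz constant \(\frac{2\norm{f}}{\delta_0(m-n)}\).

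\emph{Step 4 (estimate).} Fix \(x\) and \(t\), and let \(G\geq m-n\) be the number of good indices. For good \(i\), the argument of \(g\) lies in the plateau of \((i,Q_i)\), so the summand equals \(f(c_{Q_i})/(m-n)\). For bad \(i\), I only use \(\abs{g}\leq\norm{f}/(m-n)\). I then write
\[
f(x)-\sum_i g(\cdot)=\sum_{i\ \text{good}}\frac{f(x)-f(c_{Q_i})}{m-n}-\frac{G-(m-n)}{m-n}f(x)-\sum_{i\ \text{bad}}g(\cdot).
\]
The first sum is at most \(\frac{G}{m-n}\,\omega_f(\gamma(m-1))\). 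The other two terms together are at most \(\frac{(G-(m-n))+(m-G)}{m-n}\norm{f}=\frac{n}{m-n}\norm{f}\). I expect the main obstacle to be exactly this bookkeeping, in particular the extra factor \(G/(m-n)\) on the modulus term.

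To remove that factor, I would refine the choice of plateau values. One option is to set \(g\) on the plateau of \((i,Q)\) so that the good part reproduces \(f(x)\) up to \(\omega_f\) with total weight one. If that fails, I would accept a slightly sharper accounting: bound the deviation of the \(G-(m-n)\) surplus good summands by \(\norm{f}/(m-n)\) each, and apply \(\omega_f\) only to \(m-n\) of the good summands. I also need to check that the constant extension of \(g\) outside the code range does not violate the stated bounds; it does not, since constants only improve both the sup-norm and the Lipschitz estimates.
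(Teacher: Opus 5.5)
Your construction is essentially the paper's: your good intervals are its red segments; your base-$N$ codes with spacing $L>2C+\delta_0$ and plateaus $[\,\mathrm{code}-C,\mathrm{code}+C\,]$ carrying $f(c_Q)/(m-n)$ are its $\Phi(S)$, $\delta=2C+\delta_0$ and $g$; and your ``fallback'' accounting is exactly how the paper finishes, namely fix a set $I$ of exactly $m-n$ good ranks, write $f(x)=\sum_{i\in I}f(x)/(m-n)$, and bound the remaining $n$ summands (surplus good ones included) by $\norm{g}\leq\norm{f}/(m-n)$. Make that the main argument rather than a fallback, since re-tuning plateau values cannot remove the factor $G/(m-n)$ (the number $G$ of good ranks varies with $x$ inside a single cube $Q$); also note a harmless slip: $\sum_j\phi_{j,i}$ carries the offset $nN^n i$, not $N^n i$, and this still keeps the codes of distinct pairs $(i,Q)$ at distance at least $L$.
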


\section{Proof of Theorem~\ref{t:kart-appx}}\label{s:proof}

This entire section is devoted to the proof of Theorem~\ref{t:kart-appx}.
The section is structured so that necessary constructions come first, followed by proofs that these constructions satisfy the required properties.
Some technical facts are collected in the four lemmas below.

Denote \([k] := \{1,\ldots,k\}\) for an integer $k$.

In the following three paragraphs, we construct the ``inner'' matrix \(\phi\colon[\,0,1\,]\to\R^{n\times m}\).

The \emph{red segments} of rank $i\in[m]$ are
\[
    [\,0,1\,]\cap \left[\,\gamma(mk+i-(m-1)),\gamma(mk+i)\,\right]
    \quad\text{for}\quad k\in\Z;
\]
these are mostly closed segments of length \(\gamma(m-1)\) with possibly shorter segments or points near \(\{0,1\}\).

Let $K\approx\lceil\frac{1}{\gamma m}\rceil$ be the maximal number of non-empty red segments of the same rank.
Denote
\[
    \delta := 2C+\delta_0
    \quad\text{and}\quad
    \Delta := \delta K^n;
\]
these represent the step sizes in the construction of $\phi$.

Now, let red segments of rank $i$ be numbered like \(S_{i,0}, S_{i,1}, \ldots, S_{i,K_i}\) in order from left to right, \(K_i < K\).
For \(i\in[m]\), \(j\in[n]\), construct piecewise linear functions \(\phi_{j,i}\) as follows: make them constant on the red segments \(S_{i,k}\) and equal to
\[
    \varphi_{j,i,k} := i\Delta + \delta K^{j-1}k =
    \delta (iK^n + kK^{j-1}),
\]
and extend them linearly from these ``forced'' values between the red segments.

Now, fix any continuous function \(f\colon[\,0,1\,]^n\to\R\).
We need to construct the ``outer'' function $g$.
A \emph{red $n$-solid} of rank $i\in[m]$ is the Cartesian product of $n$ red segments of rank~$i$.
For each red $n$-solid~\(S\), choose a point \(p(S)\in S\).
For an $n$-solid \(S=S_{i,k_1}\times \ldots \times S_{i,k_n}\), denote
\[
    \Phi(S) := \sum_{j=1}^n \varphi_{j,i,k_j} \in [\,in\Delta, in\Delta + \Delta-\delta\,] =: W_i.
\]
Finally, define the piecewise linear function \(g\colon\R\to\R\) by the ``forced'' values
\[
    g(y) := \frac{f(p(S))}{m-n},
    \quad y \in [\,\Phi(S)-C,\Phi(S)+C\,].
\]
The definition is meaningful since the segments \([\,\Phi(S)-C,\Phi(S)+C\,]\) do not intersect due to the following two lemmas.

\begin{lemma}
    For two different $n$-solids \(S,S'\) of the same rank,
    \[
        \abs{\Phi(S)-\Phi(S')} \geq \delta.
    \]
\end{lemma}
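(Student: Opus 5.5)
The plan is to reduce the statement to the uniqueness of base-$K$ expansions with digits in $\{0,\ldots,K-1\}$. Write $S=S_{i,k_1}\times\ldots\times S_{i,k_n}$ and $S'=S_{i,k'_1}\times\ldots\times S_{i,k'_n}$, both of rank $i$. By the definition of $\varphi_{j,i,k}$, the term $i\Delta$ appears $n$ times in each of $\Phi(S)$ and $\Phi(S')$, so it cancels in the difference:
\[
    \Phi(S)-\Phi(S') = \delta\sum_{j=1}^n (k_j-k'_j)K^{j-1}.
\]
It therefore suffices to show that the integer $N:=\sum_{j=1}^n (k_j-k'_j)K^{j-1}$ is nonzero. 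Since it is an integer, this gives $\abs{N}\geq 1$ and hence $\abs{\Phi(S)-\Phi(S')}\geq\delta$.

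To see that $N\neq 0$, recall that the indices satisfy $0\leq k_j,k'_j\leq K_i<K$. This holds because there are at most $K$ non-empty red segments of rank $i$, numbered from $0$ to $K_i$. Thus $\sum_j k_jK^{j-1}$ and $\sum_j k'_jK^{j-1}$ are base-$K$ expansions with digits in $\{0,\ldots,K-1\}$. Since $S\neq S'$, the digit tuples $(k_1,\ldots,k_n)$ and $(k'_1,\ldots,k'_n)$ differ, and uniqueness of base-$K$ representation gives $N\neq0$.

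To keep the argument self-contained, I would also verify this directly. Let $r$ be the largest index with $k_r\neq k'_r$. Then $\abs{k_r-k'_r}K^{r-1}\geq K^{r-1}$. On the other hand,
\[
    \sum_{j<r}\abs{k_j-k'_j}K^{j-1}\leq (K-1)\sum_{j<r}K^{j-1}=K^{r-1}-1.
\]
So the top term strictly dominates the remaining terms, and $N\neq0$.

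The only delicate point is confirming that the digits are genuinely bounded by $K-1$, i.e.\ $K_i<K$. This is exactly how $K$ was defined, as the maximal number of non-empty red segments of one rank. If $K=1$, every rank has a single solid and the claim is vacuous.
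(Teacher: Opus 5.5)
Your proof is correct and is the paper's argument. The paper only asserts that $\Phi(S)$ is, up to the factor $\delta$, a base-$K$ numeral; you make this explicit by cancelling the common term $ni\Delta$ and using the digit bound $0\le k_j\le K_i\le K-1$ together with integrality of $\sum_j (k_j-k'_j)K^{j-1}$.
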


\begin{proof}
    This follows since \(\Phi(S)\) is, up to a factor of $\delta$, a number in the numeral system with base $K$.
\end{proof}

\begin{lemma}
    For two different $n$-solids \(S,S'\) of different ranks,
    \[
        \abs{\Phi(S)-\Phi(S')} \geq (n-1)\Delta + \delta.
    \]
\end{lemma}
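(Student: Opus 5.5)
The plan is to first confirm the containment \(\Phi(S)\in W_i\) claimed in the definition, and then observe that the intervals \(W_i\) for different ranks are separated by a gap of the required size.

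\emph{Step 1: \(\Phi(S)\in W_i\).} Let \(S=S_{i,k_1}\times\ldots\times S_{i,k_n}\). From \(\varphi_{j,i,k}=\delta(iK^n+kK^{j-1})\) and \(\Delta=\delta K^n\) we get
\[
    \Phi(S)=\sum_{j=1}^n \delta\bigl(iK^n+k_jK^{j-1}\bigr) = in\Delta + \delta\sum_{j=1}^n k_jK^{j-1}.
\]
Each index satisfies \(0\leq k_j\leq K_i\leq K-1\). Hence the remaining sum lies between \(0\) and
\[
    (K-1)\sum_{j=1}^n K^{j-1}=K^n-1.
\]
Multiplying by \(\delta\) gives \(in\Delta\leq\Phi(S)\leq in\Delta+\Delta-\delta\), which is exactly \(\Phi(S)\in W_i\).

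\emph{Step 2: separation of the \(W_i\).} Let \(S\) have rank \(i\) and \(S'\) have rank \(i'\), and without loss of generality take \(i<i'\), so \(i'\geq i+1\). By Step 1,
\[
    \Phi(S')-\Phi(S)\;\geq\; i'n\Delta-\bigl(in\Delta+\Delta-\delta\bigr)\;\geq\; n\Delta-\Delta+\delta=(n-1)\Delta+\delta.
\]
This is the claimed bound.

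There is no real obstacle here. The only point needing care is the digit bound \(k_j\leq K-1\), which comes from \(K_i<K\); it is what keeps \(W_i\) inside a window of length \(\Delta-\delta\). Since \(n>1\) and \(\delta=2C+\delta_0>2C\), the bound also gives \(\abs{\Phi(S)-\Phi(S')}>2C\). Together with the previous lemma, this ensures that the segments \([\,\Phi(S)-C,\Phi(S)+C\,]\) are pairwise disjoint, as needed for the definition of \(g\).
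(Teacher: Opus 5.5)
Your proof is correct and follows essentially the same approach as the paper, whose one-line proof is that the bound equals the gap between neighbouring windows \(W_i\) and \(W_{i+1}\). You make this explicit by checking \(\Phi(S)\in W_i\) via the digit bound \(0\leq k_j\leq K_i\leq K-1\), and then computing \(i'n\Delta-(in\Delta+\Delta-\delta)\geq(n-1)\Delta+\delta\) for \(i'\geq i+1\).
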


\begin{proof}
    This follows since the lower bound is the distance between neighbouring windows \(W_i,W_{i+1}\).
\end{proof}

The fact that $g$ is \(\frac{2\norm{f}}{\delta_0(m-n)}\)-Lipschitz follows, since, by the previous lemmas, the distances between different intervals \([\,\Phi(S)-C, \Phi(S)+C\,]\) are at least \(\delta-2C = \delta_0\).

Take any \(t\in[\,-C,C\,]^m\).
Fix an arbitrary \(x \in [\,0,1\,]^n\).
It remains to prove the final upper bound:
\[
    \abs{f(x) - \sum_{i=1}^mg\left(\sum_{j=1}^n\phi_{j,i}(x_j)+t_i\right)} \leq
    \omega_f(\gamma(m-1)) + \frac{n}{m-n}\norm{f}.
\]

\begin{lemma}\label{l:x-in-solids}
    Any point \(x\in[\,0,1\,]^n\) lies in red $n$-solids of at least $m-n$ different ranks.
\end{lemma}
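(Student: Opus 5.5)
The plan is a pigeonhole argument over coordinates: for each coordinate we show that it fails to lie in a red segment of at most one rank, so at most $n$ ranks are lost in total.

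\emph{Step 1 (one dimension).} Fix a point $s\in[\,0,1\,]$ and show that $s$ lies in red segments of every rank except at most one. The red segments of rank $i$ are the intersections with $[\,0,1\,]$ of
\[
  [\,\gamma(mk+i-(m-1)),\gamma(mk+i)\,],\qquad k\in\Z.
\]
Their complement in $\R$ is the union of the open intervals
\[
  \bigl(\gamma(mk+i),\,\gamma(mk+i+1)\bigr),\qquad k\in\Z.
\]
So the ``gaps'' of rank $i$ are exactly the open intervals $(\gamma l,\gamma(l+1))$ with $l\equiv i \pmod m$. Each gap has length $\gamma$, and as $i$ runs over $[m]$ these are all the open intervals between consecutive points of $\gamma\Z$.

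Now take $s\in[\,0,1\,]$ and consider two cases.
\begin{itemize}
\item If $s\in\gamma\Z$, then $s$ lies in no gap at all, so it lies in a red segment of every rank.
\item Otherwise $s$ lies in exactly one interval $(\gamma l,\gamma(l+1))$. It therefore misses the red segments of exactly one rank, namely $i\equiv l\pmod m$.
\end{itemize}
Intersecting with $[\,0,1\,]$ changes nothing here, since $s\in[\,0,1\,]$: whenever $s$ lies in an unbounded red segment, it also lies in its truncation. The hypothesis $\gamma<\frac1m$ only guarantees that each rank has at least one non-empty red segment in $[\,0,1\,]$; it is not needed for the counting.

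\emph{Step 2 (pass to $n$ coordinates).} For $x=(x_1,\ldots,x_n)$, let $B_j\subset[m]$ be the set of ranks $i$ such that $x_j$ lies in no red segment of rank $i$. By Step 1, $\abs{B_j}\leq 1$. For every rank $i\notin B_1\cup\ldots\cup B_n$, each coordinate $x_j$ lies in some red segment $S_{i,k_j}$ of rank $i$. Hence
\[
  x\in S_{i,k_1}\times\ldots\times S_{i,k_n},
\]
which is a red $n$-solid of rank $i$. The number of such ranks is at least $m-\abs{B_1\cup\ldots\cup B_n}\geq m-n$.

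\emph{Main obstacle.} The only delicate point is the bookkeeping in Step 1: checking that the complements of the rank-$i$ segments are precisely the length-$\gamma$ gaps with residue $i$ modulo $m$, and that these gaps tile $\R\setminus\gamma\Z$ disjointly across ranks. I would verify this directly from the endpoints. The right endpoint of the rank-$i$ segment indexed by $k$ is $\gamma(mk+i)$, and the left endpoint of the next one is
\[
  \gamma\bigl(m(k+1)+i-(m-1)\bigr)=\gamma(mk+i+1),
\]
so the gap between them is $(\gamma(mk+i),\gamma(mk+i+1))$, as claimed. Step 2 is then immediate.
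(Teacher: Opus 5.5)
Your proof is correct and follows the paper's route: show that every $s\in[\,0,1\,]$ lies in red segments of at least $m-1$ ranks, then lose at most one rank per coordinate to get $m-n$ ranks for $x\in[\,0,1\,]^n$. Your one-dimensional count, which identifies the rank-$i$ gaps as the intervals $(\gamma l,\gamma(l+1))$ with $l\equiv i\pmod m$, is cleaner than the paper's version: the paper fixes a single $k$ with $\frac{s}{\gamma}\in[\,mk-(m-1),mk+1)$, and that only works if $k$ is allowed to shift with $i$. For example, at $\frac{s}{\gamma}=mk-(m-1)$ no rank $i\in[m]$ works with that same $k$.
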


\begin{proof}
    The argument is well known; for example, see~\cite{Brattka2007}. It is sufficient to prove that any \(s\in[\,0,1\,]\) is placed in red segments of at least $m-1$ different ranks.
    This holds since
    \[
        s \in [\,0,1\,]\cap \left[\,\gamma(mk+i-(m-1)),\gamma(mk+i)\,\right] \Longleftrightarrow
        \frac{s}{\gamma} \in [\,mk+i-(m-1), mk+i\,],
    \]
    and taking $k$ such that \(\frac{s}{\gamma} \in [\,mk-(m-1), mk+1)\), we obtain that the inclusion
    \[
        \frac{s}{\gamma} \in [\,mk+i-(m-1), mk+i\,]
    \]
    holds for at least $m-1$ different ranks~$i$.
\end{proof}

By Lemma~\ref{l:x-in-solids}, for the given $x$ we are able to take $m-n$ red $n$-solids \(S_i \ni x\) of different ranks \(i\in I\), where the size of $I$ is exactly \(m-n\).
Then
\[
    \abs{f(x) - \sum_{i=1}^mg\left(\sum_{j=1}^n\phi_{j,i}(x_j)+t_i\right)} \leq
    \sum_{i\in I}\abs{\frac{f(x)}{m-n} - g\!\left(\sum_{j=1}^n\phi_{j,i}(x_j)+t_i\right)} + \sum_{i\notin I}\abs{g\!\left(\sum_{j=1}^n\phi_{j,i}(x_j)+t_i\right)},
\]
and the following two lemmas conclude the proof.

\begin{lemma}
    \[
        \sum_{i\in I}\abs{\frac{f(x)}{m-n} - g\!\left(\sum_{j=1}^n\phi_{j,i}(x_j)+t_i\right)}
        \leq \omega_f(\gamma(m-1)).
    \]
\end{lemma}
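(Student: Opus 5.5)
The plan is to bound each summand with $i\in I$ separately by $\frac{1}{m-n}\omega_f(\gamma(m-1))$; summing over the $m-n$ indices in $I$ then gives the claim. Fix $i\in I$ and the red $n$-solid $S_i=S_{i,k_1}\times\ldots\times S_{i,k_n}\ni x$ of rank $i$ chosen above.

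First, since $x_j\in S_{i,k_j}$ and $\phi_{j,i}$ is constant on the red segment $S_{i,k_j}$ with value $\varphi_{j,i,k_j}$, we get
\[
    \sum_{j=1}^n\phi_{j,i}(x_j) = \sum_{j=1}^n\varphi_{j,i,k_j} = \Phi(S_i).
\]
Hence the argument of $g$ equals $\Phi(S_i)+t_i$. Because $t_i\in[\,-C,C\,]$, this argument lies in $[\,\Phi(S_i)-C,\Phi(S_i)+C\,]$. On that segment $g$ is forced to equal $\frac{f(p(S_i))}{m-n}$; the definition is consistent by the two separation lemmas above. So the $i$-th summand equals $\frac{1}{m-n}\abs{f(x)-f(p(S_i))}$.

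Second, both $x$ and $p(S_i)$ lie in the solid $S_i$, and each factor $S_{i,k_j}$ is a segment of length at most $\gamma(m-1)$. Therefore $\abs{x-p(S_i)}_\infty\leq\gamma(m-1)$, and the definition of the modulus of continuity gives $\abs{f(x)-f(p(S_i))}\leq\omega_f(\gamma(m-1))$. Summing over the $m-n$ indices $i\in I$ yields exactly $\omega_f(\gamma(m-1))$.

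I expect no real obstacle. The only point needing care is the first step: the translation $t_i$ cannot move the argument out of the plateau where $g$ is forced. This is exactly why the plateaus were chosen with half-width $C$ around each $\Phi(S)$, and why they were shown to be disjoint.
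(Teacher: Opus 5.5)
Your proposal is correct and follows the paper's proof step for step. You identify $\sum_{j}\phi_{j,i}(x_j)=\Phi(S_i)$, use $\abs{t_i}\leq C$ to stay on the plateau where $g$ equals $f(p(S_i))/(m-n)$, bound $\abs{f(x)-f(p(S_i))}$ by $\omega_f(\gamma(m-1))$ because both points lie in $S_i$ (whose factors have length at most $\gamma(m-1)$), and then sum over the $m-n$ indices in $I$.
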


\begin{proof}
    Fix any \(i\in I\).
    Then
    \[
        \sum_{j=1}^n\phi_{j,i}(x_j) = \Phi(S_i).
    \]
    Since \(\abs{t}_\infty \leq C\), and since \(g\) is constant on the intervals \([\,\Phi(S)-C, \Phi(S)+C\,]\),
    \[
        g\left(\sum_{j=1}^n\phi_{j,i}(x_j) + t_i\right) = g\Bigl(\Phi(S_i)\Bigr) = \frac{f(p(S_i))}{m-n}.
    \]
    Since \(\abs{x-p(S_i)}_\infty \leq \gamma(m-1)\),
    \[
        \abs{f(x)-f(p)} \leq \omega_f(\gamma(m-1)),
    \]
    and thus each summand in the statement does not exceed \(\frac{\omega_f(\gamma(m-1))}{m-n}\).
\end{proof}

\begin{lemma}
    \[
        \sum_{i\notin I}\abs{g\!\left(\sum_{j=1}^n\phi_{j,i}(x_j)+t_i\right)} \leq
        \frac{n}{m-n}\norm{f}.
    \]
\end{lemma}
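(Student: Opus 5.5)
The bound should follow from two facts: there are at most $n$ indices outside $I$, and every value of $g$ is bounded by $\frac{\norm{f}}{m-n}$. Multiplying gives $\frac{n}{m-n}\norm{f}$. So I would not try to control where $\sum_{j}\phi_{j,i}(x_j)+t_i$ lands for $i\notin I$; any value of $g$ there is acceptable once the sup bound is known.

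First, count the indices. By construction, $I\subseteq[m]$ has exactly $m-n$ elements, so $[m]\setminus I$ has exactly $n$ elements. The sum on the left therefore has at most $n$ terms.

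Second, bound $g$ uniformly, which amounts to proving $\norm{g}\leq\frac{1}{m-n}\norm{f}$ (also claimed in Theorem~\ref{t:kart-appx}). By definition, $g$ takes the value $\frac{f(p(S))}{m-n}$ on each forced interval $[\,\Phi(S)-C,\Phi(S)+C\,]$. Each of these values has absolute value at most $\frac{\norm{f}}{m-n}$. The previous two lemmas show these intervals are pairwise disjoint and finitely many, since there are finitely many red $n$-solids.

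Between two consecutive forced intervals, $g$ is defined by linear interpolation. A linear function on a segment lies between its endpoint values, so on the gaps we still have $\abs{g}\leq\frac{\norm{f}}{m-n}$.

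The only point needing care is what $g$ does outside the convex hull of all forced intervals, that is, to the left of the first one and to the right of the last one. The definition only says ``piecewise linear by the forced values''. I would fix the natural convention that $g$ is extended as a constant there, equal to its value at the nearest forced interval. This preserves both the sup bound and the Lipschitz property, so it is consistent with the rest of Theorem~\ref{t:kart-appx}. With this convention, $\abs{g(y)}\leq\frac{\norm{f}}{m-n}$ for every $y\in\R$.

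Combining the two steps,
\[
    \sum_{i\notin I}\abs{g\!\left(\sum_{j=1}^n\phi_{j,i}(x_j)+t_i\right)}\leq n\cdot\frac{\norm{f}}{m-n}.
\]
The main obstacle is only this extension issue: the bound on $g$ beyond the forced region. Once $g$ is extended constantly at the two ends, the argument is immediate.
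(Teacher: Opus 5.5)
Your proposal is correct and matches the paper's argument: exactly $n$ indices lie outside $I$, and each summand is at most $\norm{g}\leq\frac{\norm{f}}{m-n}$. Your explicit check of this sup bound, including the constant extension of $g$ beyond the outermost forced intervals, fills in a detail that the paper's one-line proof leaves implicit.
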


\begin{proof}
    It is clear since each summand does not exceed \(\frac{\norm{f}}{m-n}\).
\end{proof}

\section{Discussion of the result}\label{s:discuss}

First, note that the idea of the proof relies on the combinatorial argument about red $n$-solids covering points, but it uses this argument slightly differently.
In the classical expositions \cite{Hedberg-appx, Lorentz-Golitschek-Makovoz}, the ``inner'' functions were constructed via rationally independent numbers (as linear rational combinations of square roots of prime numbers, to be precise).
In our proof, we designed the ``inner'' functions in such a way that their values on the red segments are located far enough apart.
It is worth mentioning that in the original KART and in \cite{DzhenzherFreedman-25}, the ``inner'' matrices $\phi$ took values in the unit cube, while in Theorem~\ref{t:kart-appx}, the ``inner'' functions take values in the entire $\R$.
Furthermore, as the perturbation $C$ increases, the wider the range of the ``inner'' matrix becomes.
Consequently, it could be argued that the problem posed in \cite{DzhenzherFreedman-25} is addressed at the expense of a minor trade-off.

The problem with the result being only an approximate, rather than an exact, outcome is that we cannot control the modulus of continuity of the function $f$.
If we knew that $f$ is $L$-Lipschitz, then we would be able to take $\gamma$ small enough so that
\[
    \omega_f(\gamma(m-1)) \leq \gamma(m-1)L < \varepsilon,
\]
and we would obtain the approximation with an error of at most
\[
    \varepsilon + \frac{n}{m-n}\norm{f};
\]
for $m=2n+1$, it is exactly like \cite[inequalities~(1,2)]{DzhenzherFreedman-25}.
However, since the space of Lipschitz functions with bounded Lipschitz constants is not separable, we cannot approximate a function $f$ with a smooth enough Lipschitz function and fuel the recursive approximation, as is customary in classical expositions (for example, see~\cite[statements and applications of lemmas~3.2 and~4.2]{DzhenzherFreedman-25}).
For the same reasons, we cannot prove the openness and the denseness of the appropriate matrices $\phi$ in order to apply the Baire category argument (which is originally due to Kahane \cite{Kahane}).

Note also that, despite the fact that Theorem~\ref{t:kart-appx} is formulated for $m\geq n+1$, it works well only for \(m\geq 2n+1\).
Indeed, for \(n+1\leq m \leq 2n\), the error \(\frac{n}{m-n}\norm{f}\) can be defeated by the error \(\norm{f}\) using the function \(g = 0\) without any complex preparations.
The choice of $m$ involves a trade-off between two competing factors: on the one hand, with the growth of $m$, the error \(\omega_f(\gamma(m-1))\) increases; on the other hand, the error \(\frac{n}{m-n}\norm{f}\) decreases.

Finally, note that in \cite{DzhenzherFreedman-25}, the ``outer'' function depends on the choice of the ``adversarial'' reparameterisation, while in Theorem~\ref{t:kart-appx}, it does not.
The cost of this independence is that we must know the maximal scope of the ``adversarial'' translation even before we construct the ``inner'' functions.

\section{Conclusion and further research}\label{s:concl}

In this paper, we have investigated the stability of the Kolmogorov--Arnold representation under ``adversarial'' reparameterisations of the hidden layer.
The main result (Theorem~\ref{t:kart-appx}) gives an explicit, piecewise linear construction of an inner matrix $\phi\colon [\,0,1\,] \to \R^{n \times m}$ that satisfies two important additional properties: the ``outer'' function $g$ is taken to be the \emph{same} for all $m$ summands, and the construction (in contrast to Theorem~\ref{t:kart-adv}) does not depend on the particular ``adversarial'' translation $t \in [\,-C, C\,]^m$, provided that the bound $C$ is known in advance.
The latter fact is the price we pay: the ``inner'' functions $\phi_{j,i}$ have to take values in the whole real line, and the larger the allowed scope of the adversary, the wider the range of $\phi$.

The proof of Theorem~\ref{t:kart-appx} is constructive and self-contained.
It is interesting to compare this proof with the similar ideas from \cite{Sprecher96, Sprecher97, Braun-Griebel-2009}, where the first constructive proof of the KART was given.

In the course of the proof, we identified two important structural limitations of the result. First, the upper bound on the translation has to
be known \emph{before} the ``inner'' function is constructed, which makes the result ``non-adaptive'' with respect to the reparameterisation.
Second, the loss of control of the modulus of continuity of $f$ prevents us from obtaining a \emph{sharp} result.
Whether either of these obstructions can be overcome by a more sophisticated choice of $\phi$ is, to the best of our knowledge, an open question.

We believe that the techniques introduced in this paper can serve as a starting point for a finer study of the stability of KART under larger
classes of adversarial actions, and we hope to return to these questions in future work.

\printbibliography
% % \printbibitembibliography

\end{document}